\documentclass{article} 
\usepackage[final]{colm2026_conference}

\usepackage{microtype}
\usepackage{hyperref}
\usepackage{url}
\usepackage{booktabs}

\usepackage{tcolorbox}
\tcbset{colback=white, colframe=black, arc=2pt, boxrule=0.5pt}
\usepackage{enumitem}

\usepackage{array}

\newcolumntype{H}{>{\setbox0=\hbox\bgroup}c<{\egroup}@{}}

\usepackage{graphicx}
\usepackage{subfigure}

\usepackage{amsthm}
\newtheorem{theorem}{Theorem}

\newtheorem{proposition}[theorem]{Proposition}

\usepackage{wrapfig}
\usepackage{xcolor}
\usepackage{listings}
\usepackage{mdframed}
\definecolor{promptbg}{RGB}{245, 245, 250}
\definecolor{promptborder}{RGB}{200, 200, 220}
\usepackage{amssymb}
\usepackage{mathtools}
\usepackage{amsthm}

\usepackage[capitalize,noabbrev]{cleveref}

\theoremstyle{plain}

\theoremstyle{definition}

\theoremstyle{remark}

\newcommand{\R}{\mathbb{R}}
\newcommand{\M}{\mathcal{M}}

\newcommand{\loss}{\mathcal{L}}

\usepackage{lineno}

\definecolor{darkblue}{rgb}{0, 0, 0.5}
\hypersetup{colorlinks=true, citecolor=darkblue, linkcolor=darkblue, urlcolor=darkblue}

\title{FlexRouter: Learning Complementary Model Sets for Flexible LLM Routing}

\author{Wang Wei$^1$, Harry Yang$^2$, Tiankai Yang$^3$, 
Samyadeep Basu$^4$, Hongjie Chen$^5$,\\
\bf{Andy Zhao$^3$, Franck Dernoncourt$^4$, Ryan A. Rossi$^4$, Hoda Eldardiry$^1$\thanks{Corresponding author.}} \\
$^1$Virginia Tech, $^2$Independent Researcher, $^3$University of Southern California, \\$^4$Adobe Research, $^5$Dolby Labs\\
\texttt{\{wangwei718,hdardiry\}@vt.edu}
}

\begin{document}

\ifcolmsubmission
\linenumbers
\fi

\maketitle

\begin{abstract}
Existing Large Language Model (LLM) routing methods score LLMs independently to select top-$k$ models. However, this ignores model correlations and enforces a rigid computational budget. Consequently, routers often select redundant models that share failure modes, limiting the overall probability of success.
To address this, we propose FlexRouter, a routing framework that explicitly models model complementarity. FlexRouter optimizes for \textit{answer coverage}, maximizing the probability that at least one selected model yields a correct response.
This objective aligns with practical inference pipelines where multiple candidate outputs are generated and a downstream verifier or user selects the final one.
We formulate routing as a coverage-oriented subset selection problem and model the routing policy using Determinantal Point Processes (DPPs), which naturally capture both model competence and redundancy. To directly optimize coverage without requiring a ground-truth target subset, we introduce a training objective based on marginalizing over failure sets. During inference, we employ a greedy strategy based on marginal log-determinant gains, enabling the router to adaptively determine subset sizes without a predefined budget.
Extensive experiments on the large-scale RouterEval benchmark demonstrate that our proposed FlexRouter achieves higher coverage with lower redundancy across both in-domain and out-of-domain tasks than strong baselines while maintaining flexible inference cost.
\end{abstract}

\section{Introduction}
\begin{wrapfigure}{r}{0.5\columnwidth}
    \centering
    \vspace{-10pt} 
    \includegraphics[width=0.48\columnwidth]{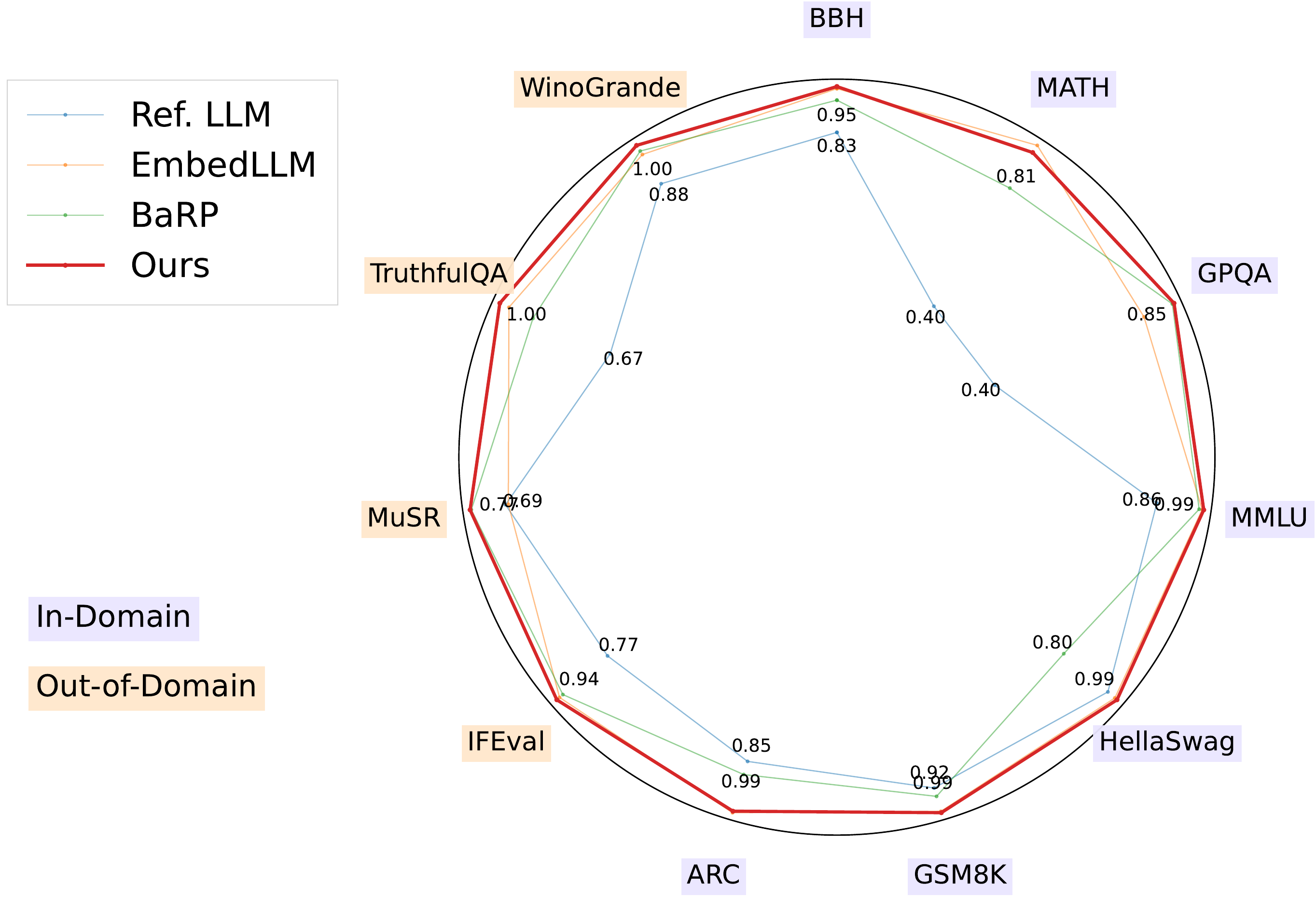}
    \caption{Testing Success@10 score of baselines and FlexRouter on in-domain and out-of-domain tasks.
    }
    \label{fig:allresults}
    \vspace{-10pt} 
\end{wrapfigure}
Large Language Models (LLMs) are increasingly deployed as pools of heterogeneous models that differ in capability, specialization, and inference cost~\citep{chen2023frugalgpt}.
A core problem in using LLMs systematically is \emph{routing}: given an input query, which LLM(s) should be invoked to maximize answer quality under a compute budget.
Routing is closely related to  earlier methods such as classic conditional computation and mixture-of-experts, where a gating mechanism activates a small fraction of experts per input \citep{jacobs1991adaptive,shazeer2017outrageously}.
In modern LLM serving, routing has become especially important since no single model dominates across all tasks \citep{srivatsa2024harnessing}, and the candidate set can be large and rapidly evolving \citep{hu2024routerbench,huang2025routereval}.

\begin{figure*}[t]
\centering
\small

\newlength{\teaserheight}
\setlength{\teaserheight}{5.3cm}

\begin{minipage}[t]{0.323\textwidth}
\begin{tcolorbox}[title=\textbf{Question Answering}, height=\teaserheight, valign=top]
\textbf{Query:} \emph{The car wash is only 100m away from my house, should I walk or drive?}

\vspace{10pt}
\begin{minipage}[t]{0.46\textwidth}
\textbf{Top-$k$}
\begin{itemize}[topsep=3pt, itemsep=3pt, leftmargin=*]
\item Walk...
\item Walk...
\item Walk...
\end{itemize}
\end{minipage}
\hfill
\begin{minipage}[t]{0.52\textwidth}
\textbf{Ours}
\begin{itemize}[topsep=3pt, itemsep=3pt, leftmargin=*]
\item Walk...
\item Drive...
\item Drive...
\end{itemize}
\end{minipage}
\end{tcolorbox}
\end{minipage}
\hfill
\begin{minipage}[t]{0.323\textwidth}
\begin{tcolorbox}[title=\textbf{Math reasoning}, height=\teaserheight, valign=top]
\textbf{Query:} \emph{If you flip a fair coin twice, what is the expected number of heads}

\vspace{10pt}
\begin{minipage}[t]{0.3\textwidth}
\textbf{Top-$k$}
\begin{itemize}[topsep=3pt, itemsep=3pt, leftmargin=*]
\item $1$
\item $1$
\item $1$
\end{itemize}
\end{minipage}
\hfill
\begin{minipage}[t]{0.68\textwidth}
\textbf{Ours}
\begin{itemize}[topsep=3pt, itemsep=3pt, leftmargin=*]
\item 1 by enumerating outcomes
\item 1 by linearity of expectation
\item 1 by symmetry
\end{itemize}
\end{minipage}
\end{tcolorbox}
\end{minipage}
\hfill
\begin{minipage}[t]{0.323\textwidth}
\begin{tcolorbox}[title=\textbf{Coding}, height=\teaserheight, valign=top]
\textbf{Query:} \emph{Write a Python palindrome checker (ignore space and case).}

\vspace{10pt}
\begin{minipage}[t]{0.43\textwidth}
\textbf{Top-$k$}
\begin{itemize}[topsep=3pt, itemsep=3pt, leftmargin=*]
\item s == \newline s[::-1]
\item s == \newline s[::-1]
\item s == \newline s[::-1]
\end{itemize}
\end{minipage}
\hfill
\begin{minipage}[t]{0.55\textwidth}
\textbf{Ours}
\begin{itemize}[topsep=3pt, itemsep=3pt, leftmargin=*]
\item normalize + reverse
\item remove spaces + lowercase
\item s == s[::-1]
\end{itemize}
\end{minipage}
\end{tcolorbox}
\end{minipage}

\caption{
Independent top-$k$ routing often selects highly similar models, leading to redundant outputs and shared failure modes.
FlexRouter instead selects complementary models that cover different interpretations, reasoning paths, and implementations, improving answer coverage across tasks.
}
\label{fig:teaser}
\end{figure*}

A common strategy in LLM routing is to assign each candidate model a predicted score and select the top-$k$ models independently~\citep{embedllm,chen2024routerdc}. Although simple and effective, this design overlooks an important characteristic of modern LLM systems: \emph{model correlation}. Models trained with similar data or architectures often exhibit similar strengths and failure modes. As a result, independent selection of the highest-scoring models can lead to \emph{redundant selections} that produce nearly identical outputs. When these outputs are incorrect, the entire subset fails, limiting the probability of obtaining at least one correct answer. 
As illustrated in Figure~\ref{fig:teaser}, SOTA routing models that do not take diversity into account tend to
produce multiple similar responses that fail in the same way, while our approach that selects a diverse set of models yields varied reasoning paths and increases the likelihood that at least one response is correct. 
Such diversity is particularly important in real-world systems, where multiple candidate responses are generated and a downstream verifier, reranker, or human user selects the final answer. In these settings, the key objective is not that \emph{all} selected models are similarly correct but that they could be diverse and \emph{at least one} of them is correct. This does not by itself solve final answer selection, but it provides a necessary candidate pool for downstream selection. If all selected models fail, no downstream selector can recover the correct answer.

Motivated by this observation, we formulate LLM routing as a \emph{coverage-oriented subset selection} problem. Given an input query, our objective is to select a subset of models that maximizes the likelihood of obtaining at least one correct response while minimizing redundant selections.
To achieve this, we propose \textbf{FlexRouter}, a routing framework that explicitly models both model competence and inter-model correlations. We parameterize the routing policy using Determinantal Point Processes (DPPs)~\citep{han2017faster_map_greedy_dpp}. DPPs naturally define a distribution over subsets, inherently favoring selections that are high-quality and diverse. Unlike independent scoring methods, DPPs penalize the joint selection of highly similar models, encouraging complementary model sets.

A central challenge in this framework is learning under set-valued objectives~\citep{dietterich1997solving_set_value_multi_instance,cour2011learning_partial}. For any given query, there is generally no unique ground-truth subset, as any combination containing at least one successful model is acceptable. Because standard supervised learning requires fixed targets, we must instead optimize directly for answer coverage. To bridge this gap, we introduce a coverage-aligned training objective designed to maximize the probability of sampling a subset with at least one correct model.
This objective is derived by marginalizing over \emph{failure sets}, yielding a tractable objective in terms of DPP determinants.
\begin{figure}[t]
    \centering
    \includegraphics[width=0.9\linewidth]{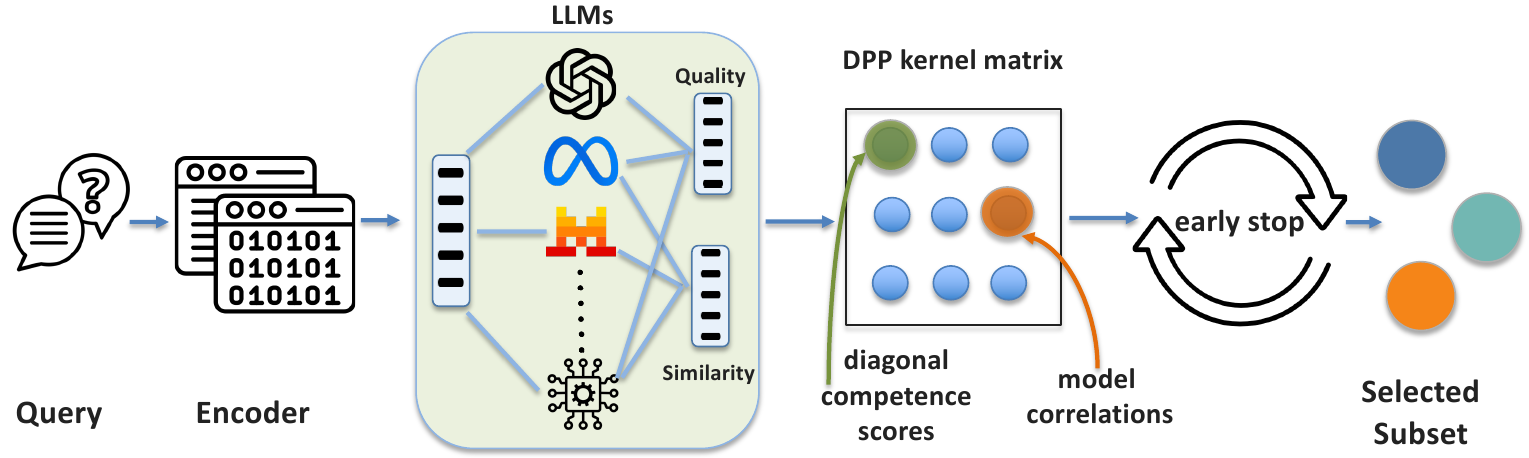}
    \caption{FlexRouter overview. For a query, the router encoder produces embeddings that define query-dependent competence for each LLM, while learned model embeddings capture pairwise similarity between models. These signals define a DPP kernel whose diagonal entries favor strong models and whose off-diagonals penalize redundant selections, thus encouraging individually strong yet mutually diverse model sets. Greedy selection with early stopping returns an adaptive-size subset.}
    \label{fig:placeholder}
\end{figure}
At inference time, FlexRouter performs greedy subset selection using marginal log-determinant gains~\citep{nemhauser1978analysis_submodular} and employs an adaptive stopping rule. This allows the router to dynamically determine the number of models to invoke for each query, allocating more resources to difficult queries while avoiding unnecessary computation on easier ones~\citep{han2021dynamic}.

We evaluate FlexRouter on the large-scale \textsc{RouterEval} benchmark \citep{huang2025routereval}, which provides extensive model--query performance records across a broad range of tasks.
FlexRouter consistently achieves higher answer coverage than strong routing baselines with lower inference cost.
Moreover, FlexRouter reduces redundancy in the selected subsets.
These results highlight the value of explicitly modeling model correlations and performing adaptive subset selection for LLM routing at scale.

\paragraph{Contributions.}
Our main contributions are:
(i) we formulate LLM routing as a \emph{coverage-oriented subset selection} problem that emphasizes the probability of obtaining at least one correct answer;
(ii) we propose \textbf{FlexRouter}, a DPP-based subset routing framework that jointly models model quality and redundancy to select complementary and diverse model sets;
(iii) we derive a tractable \emph{coverage training objective} via failure-set marginalization that directly optimizes the probability of selecting at least one correct model;
(iv) we develop an adaptive greedy inference strategy that selects variable-size subsets without requiring a fixed budget;
(v) we demonstrate consistent improvements on \textsc{RouterEval} \citep{huang2025routereval}, including higher coverage with lower redundancy under flexible inference cost.

\section{Methodology}
\subsection{Problem Formulation}
Let $\M = \{1, \dots, M\}$ be a set of available LLMs. For a given input query $x$, the objective of the router is to select a subset of models $S \subseteq \M$ that maximizes the probability of generating a correct response while minimizing redundancy. 
We assume access to a dataset of query-response pairs for each model, where each query $x$ is associated with a binary label vector ${y} \in \{0, 1\}^M$. Here, $y_i = 1$ if model $i$ answers $x$ correctly, and $y_i = 0$ otherwise.

For a specific query $x$, we partition the model space into a \textit{Correct Set} $C_x$ and a \textit{Failure Set} $F_x$:
\begin{equation}
    C_x = \{i \in \M \mid y_i = 1\}, \quad F_x = \{i \in \M \mid y_i = 0\}.
\end{equation}
The routing task is successful if the selected subset $S$ contains at least one model from $C_x$. Any subset that intersects $C_x$ is considered successful under this objective, as it contains at least one correct model. We define the reward function as the indicator of coverage:
\begin{equation}
    R(x, S) = \mathbb{I}[S \cap C_x \neq \emptyset] = 1 - \prod_{i \in S} (1 - y_i).
\end{equation}

\subsection{Complementary Model Sets Parameterization}
To capture both the individual capabilities of models and their pairwise correlations, we model the subset selection probability using a Determinantal Point Process (DPP) which defines a distribution over subsets that naturally favors high-quality yet non-redundant selections, aligning with the coverage-based routing objective. A DPP is fully parameterized by a positive semi-definite (PSD) $L$-ensemble matrix $L_x \in \R^{M \times M}$. The probability of selecting a subset $S$ is given by:
\begin{equation}
    \mathcal{P}_{L_x}(S) = \frac{\det((L_x)_S)}{\det(I + L_x)},
\end{equation}
where $(L_x)_S$ denotes the submatrix of $L_x$ indexed by the elements of $S$, and $I$ is the identity matrix.
To enable end-to-end learning, we propose a complementary parameterization for the kernel $L_x$. This construction ensures the matrix is inherently PSD and explicitly models the trade-off between model performance and redundancy.

\paragraph{Query and Model Embeddings.} 
Let $E(x) = {v}_x \in \R^d$ be the query embedding generated by a router encoder where $E(\cdot)$ is a shared text encoder that maps the input query into a dense representation. Each model $i$ is assigned a learnable embedding ${u}_i \in \R^d$ representing its functional capabilities.
We compute a scalar quality score $q_i(x) \in \R^+$ for each model, representing the confidence that model $i$ can answer query $x$. This is parameterized as:
\begin{equation}
    q_i(x) = \sigma\left( {v}_x^\top {u}_i \right),
\end{equation}
where $q_i(x)$ captures query-dependent model competence.

We define a similarity matrix ${K} \in \R^{M \times M}$ based on the cosine similarity between model embeddings, which captures the static correlation between models (i.e., models with similar architectures or training data will have high similarity):
\begin{equation}
    K_{ij} = \frac{{u}_i^\top {u}_j}{\|{u}_i\| \|{u}_j\|}.
\end{equation}

\paragraph{Kernel Construction.} 
The final query-dependent kernel $L_x$ is constructed as:
\begin{equation}
    L_x = \operatorname{diag}({q}(x)) \cdot {K} \cdot \operatorname{diag}({q}(x))
\end{equation}
Element-wise, this corresponds to $(L_x)_{ij} = q_i(x) q_j(x) K_{ij}$. The diagonal entries $(L_x)_{ii} = q_i(x)^2$ capture the individual quality of each model, while the off-diagonal entries penalize the joint selection of correlated models. Consequently, the determinant of a subset reflects both the quality and diversity of the selected models, as highly correlated models reduce the determinant.

\subsection{Learn to Maximize Coverage}
The standard maximum likelihood estimate for DPPs assumes access to an observed target subset. In our setting, the supervision does not specify a unique optimal routing decision. Any subset that intersects the correct set $C_x$ is considered successful. As a result, there is no single ground-truth subset to maximize likelihood against. Instead, we propose a Coverage Loss that directly maximizes the probability of obtaining at least one correct answer.
Furthermore, in traditional DPPs, the observed supervision is a subset valued random variable $Y \subseteq \mathcal M$, which enables likelihood based learning via $\log \Pr(Y = S^*)$. In contrast, our setting observes only a binary correctness vector $\mathbf y \in \{0,1\}^M$, which is not a realization of $Y$ and therefore admits no valid DPP maximum likelihood objective.

To efficiently compute the probability of success, we consider its complement, i.e., the probability that a sampled subset $Y$ fails to cover the query (i.e., $Y$ is entirely contained within the failure set $F_x$). This is given by the marginal probability of the failure set (see Appendix~\ref{theory_results}, Proposition~\ref{prop:coverage-diversity} for detailed analysis):
\begin{equation}
    P(Y \subseteq F_x) = \sum_{S \subseteq F_x} \mathcal{P}_{L_x}(S) = \frac{\det(I + (L_x)_{F_x})}{\det(I + L_x)}.
\end{equation}
The probability of success (Hit) is therefore $P_{\text{succ}}(x) = 1 - P(Y \subseteq F_x)$. We minimize the negative log-probability of success as:
\begin{equation}
    \loss_{\text{hit}}(x) = -\log \left( 1 - \frac{\det(I + (L_x)_{F_x})}{\det(I + L_x)} \right).
\end{equation}

\paragraph{Auxiliary Supervision.} 
In addition to the coverage objective, we include a binary cross-entropy (BCE) loss on the predicted model-wise correctness scores $q_i(x)$ to provide direct supervision for individual model predictions. This auxiliary loss stabilizes training by guiding the encoder $E(\cdot)$ and the model embeddings $\{u_i\}$ to produce accurate estimates of per-model correctness. The overall training objective is:
\begin{equation}
\mathcal{L} = \mathcal{L}_{\text{hit}} + \lambda \sum_{i=1}^M \mathrm{BCE}(q_i(x), y_i),
\end{equation}
where $\lambda$ controls the strength of the auxiliary supervision.

\subsection{Flexible Router Inference}
At inference time, we seek the subset $S$ that maximizes the determinant that corresponds to selecting the most probable subset under the learned DPP. This is also known as MAP inference~\citep{gillenwater2012near_optimal_dpp}. However, exact MAP inference is NP-hard~\citep{civril2009selecting_map_np}. We employ a greedy algorithm with an adaptive stopping condition.

We initialize $S = \emptyset$. At each step, we select the model $i \notin S$ that provides the maximal multiplicative gain to the determinant volume. The marginal gain $g_i(S)$ is defined as:
\begin{equation}
    g_i(S) = \frac{\det((L_x)_{S \cup \{i\}})}{\det((L_x)_S)}.
\end{equation}
Using the Schur complement, this can be computed efficiently as:
\begin{equation}
    g_i(S) = (L_x)_{ii} - (L_x)_{i,S} [(L_x)_S]^{-1} (L_x)_{S,i}.
\end{equation}

\textbf{Stopping Condition:}
Unlike top-$k$ routing which enforces a fixed budget, FlexRouter dynamically determines the subset size. We stop adding models when the marginal gain $g_i(S)$ falls below a threshold $\tau \geq 0$ relative to the initial gain. Crucially, because the DPP log-determinant objective is non-monotone, adding highly correlated models can actually decrease the overall subset score. This adaptive stopping rule naturally stops selection when candidates offer no unique contributions (see Appendix~\ref{theory_results}, Proposition~\ref{thm:submodular}-\ref{thm:greedy} for formal proofs and intuition), allowing the router to adaptively balance coverage and computational cost.

\section{Experiments}
We evaluate FlexRouter on large-scale LLM routing benchmarks to answer the following questions:
(i) Does FlexRouter improve coverage over independent-scoring baselines?
(ii) Does modeling complementarity reduce redundancy in the selected model sets?
(iii) Can flexible routing reduce inference cost while maintaining high coverage?
(iv) Does the learned routing policy generalize to unseen tasks?

\subsection{Benchmarks and Baselines}
We evaluate FlexRouter on \textsc{RouterEval}~\citep{routereval}, a large-scale LLM routing benchmark that provides binary correctness labels for each query--model pair. We consider two settings with shared candidate pools: a \textit{medium-pool} setting with 3811 candidate LLMs and a \textit{large-pool} setting with 5000 candidate LLMs.

\textbf{Datasets.} For the medium-pool setting, we train and evaluate in-domain on BBH~\citep{bbh}, MATH~\citep{math}, and GPQA~\citep{gpqa}, and test out-of-domain generalization on IFEval~\citep{ifeval} and MuSR~\citep{musr}. For the large-pool setting, the in-domain tasks are MMLU~\citep{hendrycks2021mmlu}, HellaSwag~\citep{zellers2019hellaswag}, GSM8K~\citep{cobbe2021gsm8k}, and ARC~\citep{clark2018arc}, while TruthfulQA~\citep{truthfulqa} and WinoGrande~\citep{sakaguchi2021winogrande} are used for out-of-domain evaluation. For in-domain evaluation, we report results on held-out 20\% test splits of the training tasks. For out-of-domain evaluation, the router is trained only on the in-domain tasks and evaluated on unseen tasks. Details are in Table~\ref{tab:routereval_datasets} and Appendix~\ref{app:data}.

\textbf{Baselines.}
We compare FlexRouter with strong routing baselines. \textbf{Independent scoring} ranks models by predicted correctness and selects the top-$k$ candidates independently, without modeling inter-model correlations. \textbf{EmbedLLM}~\citep{embedllm} learns query-dependent model scores in a shared embedding space, but still performs independent top-$k$ selection. \textbf{BaRP}~\citep{barp} formulates routing as a multi-objective contextual bandit and learns an adaptive policy under bandit feedback. We also report the \textbf{Ref.\ score}, which is the performance of a representative strong single model on each benchmark and serves as a single-model reference. Additional baseline details are deferred to Appendix~\ref{app:baseline}. For fixed-budget comparisons, all routing methods are evaluated with the same maximum selection budget.

We also evaluate additional coverage- and diversity-oriented baselines, including Random-k, MaxDiversity, and MMR. We defer their definitions and results to Appendix~\ref{app:coverage_baselines}.

\subsection{Evaluation Metrics}

Let $\mathcal{Q}$ denote the set of queries. We evaluate routing quality using \textbf{Success@$k$}, the fraction of queries for which at least one selected model is correct:
\begin{equation}
  \text{Success@}k
  \;=\;
  \frac{1}{|\mathcal{Q}|}\sum_{q \in \mathcal{Q}}
  \mathbf{1}\!\left[\,\exists\, m \in \mathcal S_q : y_{q,m} = 1 \right].
  \label{eq:success_at_k}
\end{equation}

To measure redundancy, we report \textbf{ILD@$k$} (Intra-List Diversity), defined as the average pairwise cosine distance among the selected models in a fixed pretrained embedding space:
\begin{equation}
  \text{ILD@}k
  \;=\;
  \frac{1}{|\mathcal{Q}|}\sum_{q \in \mathcal{Q}}
  \frac{1}{\binom{|\mathcal S_q|}{2}}
  \sum_{\substack{i,j \,\in\, \mathcal S_q \\ i < j}}
  \left(1 - \frac{\mathbf{u}_i^\top \mathbf{u}_j}{\|\mathbf{u}_i\|\,\|\mathbf{u}_j\|}\right).
  \label{eq:ild_at_k}
\end{equation}
Higher ILD indicates less redundant and more complementary selections. Further metric details are provided in Appendix~\ref{app:metrics}.

\subsection{Implementation Details}
FlexRouter uses a shared query encoder to map each query into a representation, and learnable model embeddings to represent candidate models. Queries are encoded using RoBERTa-base and projected into a shared 128-dimensional space, which is used to construct the query-dependent DPP kernel. The model is trained with Adam for up to 100 epochs. The training objective combines the coverage loss with an auxiliary binary cross-entropy loss with weight $\lambda = 1.0$. At inference time, model selection is performed using a greedy MAP procedure based on marginal log-determinant gains, with adaptive stopping and a maximum subset size of $k=10$. All experiments are conducted on NVIDIA A100 80GB GPUs. Details and computational complexity are provided in Appendix~\ref{app:imp}.

\textbf{Downstream Answer Selection.}
FlexRouter focuses on candidate-pool construction rather than final answer selection. Its objective is to select a complementary subset of models such that at least one selected model is likely to produce a correct response for the given task. 

Therefore, Success@k should be interpreted as a routing-stage coverage metric, not as final deployed accuracy. In practice, the routed candidates can be passed to a task-specific verifier, reward model, LLM-as-judge reranker, self-consistency or aggregation method, or human user. However, these downstream mechanisms are not guaranteed to recover a lone correct response in every setting. A full end-to-end evaluation with a concrete selector is therefore complementary to our work and remains an important future direction.

\subsection{Routing Performance}
\label{sec:main_results}
\begin{table*}[t]
\centering
\small
\caption{Success@10 on the \textbf{medium-pool setting} (3811 models). Ref. score is the reference average accuracy for each task of a representative LLM, such as GPT-4. Best results are \textbf{bolded}; second-best are \underline{underlined}.
}
\label{tab:succ10-new}
\begin{tabular}{lccc cc c}
\toprule
 & \multicolumn{3}{c}{\textbf{In-Domain}} & \multicolumn{2}{c}{\textbf{Out-of-Domain}} &  \\
\cmidrule(lr){2-4} \cmidrule(lr){5-6}
Method & BBH & MATH & GPQA & IFEval & MuSR & Avg \\
\midrule
Ref. score   & 0.830 & 0.400 & 0.397 & 0.769 & 0.699 & 0.619 \\
EmbedLLM    & \underline{0.9419} & \textbf{0.8264} & 0.7731 & \underline{0.9298} & 0.6931 & \underline{0.8329} \\
BaRP        & 0.9124 & 0.7132 & \underline{0.8445} & 0.9187 & \underline{0.7712} & 0.8320 \\
\textbf{Ours} & \textbf{0.9471} & \underline{0.8075} & \textbf{0.8487} & \textbf{0.9390} & \textbf{0.7738} & \textbf{0.8632} \\
\bottomrule
\end{tabular}
\end{table*}

\begin{table*}[t]
\centering
\small
\caption{Success@10 on the \textbf{large-pool setting} (5000 models). Best results are \textbf{bolded}; second-best are \underline{underlined}.
}
\label{tab:succ10-old}
\begin{tabular}{lcccc cc c}
\toprule
 & \multicolumn{4}{c}{\textbf{In-Domain}} & \multicolumn{2}{c}{\textbf{Out-of-Domain}} &  \\
\cmidrule(lr){2-5} \cmidrule(lr){6-7}
Method & MMLU & HellaSwag & GSM8K & ARC & TruthfulQA & WinoGrande & Avg \\
\midrule
Ref. score   & 0.864 & 0.953 & 0.920 & 0.852 & 0.669 & 0.875 & 0.855 \\
EmbedLLM    & \underline{0.9865} & \underline{0.9776} & \underline{0.9848} & \textbf{0.9957} & \underline{0.9682} & 0.9684 & \underline{0.9802} \\
BaRP        & 0.9783 & 0.7979 & 0.9432 & 0.8889 & 0.8972 & \underline{0.9795} & 0.9142 \\
\textbf{Ours} & \textbf{0.9907} & \textbf{0.9851} & \textbf{0.9886} & \underline{0.9915} & \textbf{0.9951} & \textbf{0.9976} & \textbf{0.9914} \\
\bottomrule
\end{tabular}
\end{table*}

Tables~\ref{tab:succ10-new} and~\ref{tab:succ10-old} report routing performance
measured by Success@10 on the two RouterEval settings.
Across both settings, FlexRouter achieves the highest average success rate,
demonstrating the benefit of selecting complementary model subsets.

\textbf{Medium-pool setting results.}
The medium-pool setting (Table~\ref{tab:succ10-new}) contains fewer candidate models and several challenging
reasoning benchmarks.
FlexRouter achieves the best overall performance with an average
Success@10 of \textbf{0.8632}.
The improvement is most pronounced on GPQA, where FlexRouter substantially
outperforms both baselines.
The method also achieves the highest performance on both out-of-domain
tasks, suggesting that the learned routing policy
generalizes well to unseen tasks.

\textbf{Large-pool setting results.}
On the large-pool setting (Table~\ref{tab:succ10-old}), FlexRouter achieves the best
overall performance with an average Success@10 of \textbf{0.9914}.
The method obtains the highest score on five of the six tasks and performs
particularly well on the out-of-domain benchmarks TruthfulQA and WinoGrande.
These results indicate that explicitly modeling correlations between models
helps identify complementary candidates and increases the likelihood that at
least one selected model produces a correct answer.

EmbedLLM achieves the highest score on MATH.
This behavior likely reflects that MATH contains a small number of specialist
models with strong performance, where selecting the highest-scoring models
independently can already perform well.
Nevertheless, FlexRouter consistently achieves the best average performance
across tasks, indicating that modeling model complementarity provides a
robust advantage across diverse benchmarks.

\textbf{Additional coverage analysis.}
To further analyze the behavior across smaller budgets and the composition of selected pools, Appendix~\ref{app:coverage_analysis} reports Success@k curves, Avg-Correct@10, and Zero-Correct Rate. These analyses show that FlexRouter's advantage appears in the multi-model setting and that it reduces all-wrong selected pools.

\subsection{Subset Diversity}
\label{sec:diversity}

\begin{table*}[t]
\centering
\small
\caption{ILD@10 in the fixed pre-trained embedding space on the \textbf{medium-pool setting}.
  Higher is better. 
  Best results are \textbf{bolded}.
}
\label{tab:ild10-new}
\begin{tabular}{lccc cc c}
\toprule
 & \multicolumn{3}{c}{\textbf{In-Domain}} & \multicolumn{2}{c}{\textbf{Out-of-Domain}} &  \\
\cmidrule(lr){2-4} \cmidrule(lr){5-6}
Method & BBH & MATH & GPQA & IFEval & MuSR & Avg \\
\midrule
EmbedLLM    & 0.2114    & 0.1432    & 0.2196    & 0.2524    & 0.2295    & 0.2112 \\
BaRP        & 0.1983 & 0.1993 & 0.1990 & 0.1966 & 0.1980 & 0.1982 \\
\textbf{Ours} & \textbf{0.7249} & \textbf{0.7515} & \textbf{0.7413} & \textbf{0.7895} & \textbf{0.8027} & \textbf{0.7620} \\
\bottomrule
\end{tabular}
\end{table*}

\begin{table*}[t]
\centering
\small
\caption{ILD@10 in the fixed pre-trained embedding space on the \textbf{large-pool setting}.
  Higher is better.  Best results are \textbf{bolded}.
}
\label{tab:ild10-old}
\begin{tabular}{lcccc cc c}
\toprule
 & \multicolumn{4}{c}{\textbf{In-Domain}} & \multicolumn{2}{c}{\textbf{Out-of-Domain}} &  \\
\cmidrule(lr){2-5} \cmidrule(lr){6-7}
Method & MMLU & HellaSwag & GSM8K & ARC & TruthfulQA & WinoGrande & Avg \\
\midrule
EmbedLLM    & 0.1302 & 0.2765 & 0.1775 & 0.4536 & 0.3747 & 0.2732 & 0.2810 \\
BaRP        & \textbf{0.5520} & 0.5521 & 0.5521 & 0.5520 & 0.5521 & 0.5521 & 0.5521 \\
\textbf{Ours} & 0.4709 & \textbf{0.7052} & \textbf{0.6483} & \textbf{0.7653} & \textbf{0.7396} & \textbf{0.7516} & \textbf{0.6802} \\
\bottomrule
\end{tabular}
\end{table*}

We next examine whether the selected model subsets exhibit higher diversity.
Tables~\ref{tab:ild10-new} and~\ref{tab:ild10-old} report ILD@10, which measures
the average pairwise cosine distance between embeddings of the selected models.
Higher values indicate that the router selects models that are less redundant
and more complementary.

Across both settings, FlexRouter consistently produces the most diverse model
subsets.
On the \textbf{large-pool setting} (Table~\ref{tab:ild10-old}), FlexRouter achieves the
highest average ILD@10 and obtains the best result on five of the six tasks.
In particular, the diversity gains are most pronounced on ARC, TruthfulQA, and
WinoGrande, where the selected models are substantially more dissimilar than
those chosen by the baselines.
Although BaRP produces relatively high diversity score on MMLU, its selections are
nearly constant across tasks, suggesting that it relies on a fixed set of
models rather than adapting to query-specific complementarity.

The difference is even more pronounced on the \textbf{medium-pool setting}
(Table~\ref{tab:ild10-new}), where FlexRouter achieves substantially higher
ILD@10 on every task.
In contrast, EmbedLLM and BaRP produce relatively low diversity scores,
indicating that their routing strategies tend to select similar models.

These results provide empirical evidence that FlexRouter effectively captures
model complementarity during routing.
By explicitly modeling correlations between models, the router selects
subsets that are both high-quality and diverse, which directly supports the
coverage improvements observed in Section~\ref{sec:main_results}.

\subsection{Generalization to Unseen Tasks}
\label{sec:generalization}

We next examine whether the learned routing policy generalizes to tasks that are not observed during training.
Across both RouterEval settings, FlexRouter consistently achieves the best Success@10 on the out-of-domain benchmarks according to Tables~\ref{tab:succ10-new} and~\ref{tab:succ10-old}, indicating that the routing strategy learned on the in-domain tasks transfers effectively to unseen evaluation settings.
The diversity analysis further supports this observation.
As shown in Tables~\ref{tab:ild10-new} and~\ref{tab:ild10-old}, FlexRouter selects substantially more diverse model subsets on the out-of-domain tasks compared with both baselines.
This behavior suggests that the router captures general patterns of model complementarity rather than memorizing task-specific preferences.
Together, these results indicate that modeling correlations between models enables FlexRouter to construct complementary model subsets that remain effective even when routing queries from previously unseen tasks.

\subsection{Ablation on Kernel Design}
\label{sec:kernel_ablation}

The DPP kernel requires a similarity measure between model embeddings.
Our default design uses \textbf{cosine similarity}, while an alternative
is a \textbf{Gaussian RBF kernel} (as detailed in Appendix~\ref{app:rbf})
$S_{ij} = \exp(-\gamma \|\mathbf{u}_i - \mathbf{u}_j\|^2)$
with a trainable bandwidth $\gamma$.
Table~\ref{tab:kernel_ablation} reports results on both RouterEval settings
at $k=10$. On the large-pool setting, the RBF kernel achieves slightly higher
in-domain Success@10, while cosine performs better on the out-of-domain
tasks and consistently produces more diverse model subsets.
On the medium-pool setting, cosine achieves higher in-domain success and again
selects more diverse subsets, while RBF attains slightly higher
out-of-domain success. Overall, cosine similarity consistently yields higher diversity while
maintaining competitive success rates across both settings.
We therefore adopt cosine similarity as the default kernel for FlexRouter.

\begin{table}[t]
\centering
\small
\caption{Kernel design ablation at $k = 10$.
  ILD is measured in the fixed embedding space.}
\label{tab:kernel_ablation}
\setlength{\tabcolsep}{5pt}
\begin{tabular}{l l c c c c H}
\toprule
Setting & Kernel & ID Succ & ID ILD & OOD Succ & OOD ILD & $\gamma$ \\
\midrule
{Large-pool}
  & RBF      & \textbf{0.9920} & 0.6019 & 0.9954 & 0.6923 & 2.19 \\
  & \textbf{Cosine} & 0.9854 & \textbf{0.6474} & \textbf{0.9962} & \textbf{0.7456} & N/A \\
\midrule
{Medium-pool}
  & RBF      & 0.8804 & 0.7109 & \textbf{0.8757} & 0.7883 & 1.56 \\
  & \textbf{Cosine} & \textbf{0.9058} & \textbf{0.7392} & 0.8486 & \textbf{0.7961} & N/A \\
\bottomrule
\end{tabular}
\end{table}

\subsection{Adaptive Subset Size via Stopping Threshold $\tau$}
\label{sec:tau}

FlexRouter allows adaptive control of inference cost through a stopping threshold $\tau$ in the greedy MAP procedure.
A model is added only if its marginal log-determinant gain exceeds $\tau$ times the first gain, allowing the router to stop early when additional models provide limited benefit.
Figure~\ref{fig:tau} illustrates the resulting coverage–cost trade-off on the combined in-domain test set of the large-pool setting ($k_{\max}=10$). When $\tau$ is small, the router selects nearly all candidate models and achieves the highest coverage. As $\tau$ increases, the average subset size decreases while Success@10 gradually declines.
A favorable operating point appears around $\tau \approx 0.2$,
where the router reduces the average subset size substantially
while maintaining high coverage. Beyond this point, further reductions in subset size lead to a sharper drop in coverage. Exact numerical results are reported in Table~\ref{tab:tau_ablation}.

\begin{figure}
    \centering
    \small
    \includegraphics[width=\linewidth]{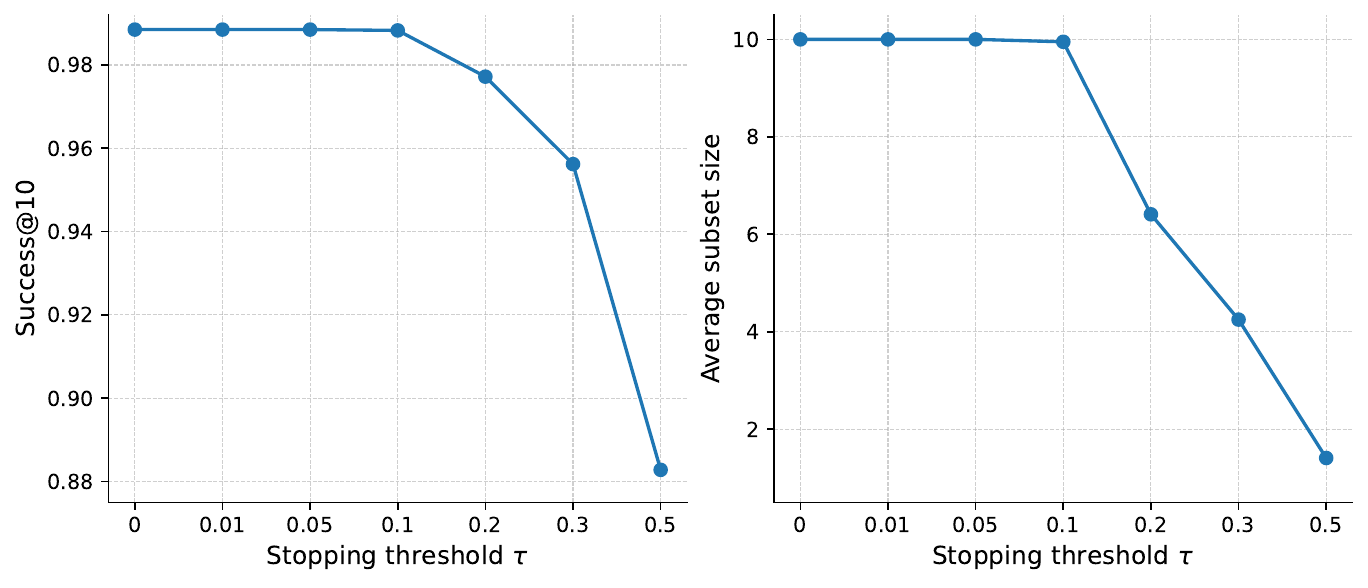}
    \caption{Success@10 and Average Subset Size over Stopping Threshold $\tau$. When $\tau \approx 0.2$, the router reduces the average subset size substantially while maintaining high coverage.}
    \label{fig:tau}
\end{figure}

These results demonstrate that the stopping threshold provides a simple mechanism for controlling the coverage–cost trade-off. Importantly, the subset size varies across queries, indicating that FlexRouter adapts its inference budget to query difficulty rather than using a fixed number of models.

\section{Limitations and Scope}
\label{sec:limitations}

FlexRouter focuses on routing-stage candidate-pool construction rather than final answer selection. Success@k measures whether the selected subset contains at least one correct model, so it should not be interpreted as final deployed accuracy. The selected candidates may be passed to a verifier, reranker, aggregation method, or human user, but these downstream mechanisms are not guaranteed to recover a correct answer in every case. Jointly evaluating FlexRouter with concrete downstream selectors is an important future direction.

Our diversity analysis measures model-level diversity using fixed model embeddings, which does not necessarily imply output-level semantic diversity for every query. Since RouterEval provides correctness labels but not full generated responses for all query--model pairs, response-level diversity and answer-agreement analysis are left for future work.

Our cost analysis uses the number of invoked models, or average subset size, as a proxy for inference cost. Real deployment cost also depends on model-specific latency, token price, output length, batching, and hardware constraints. Extending FlexRouter to optimize such model-specific costs is a useful future direction.

\section{Conclusion}\label{sec:conc}
We presented \textbf{FlexRouter}, a coverage-oriented routing framework that selects complementary subsets of LLMs under flexible inference budgets. By modeling routing as a subset selection problem with determinantal point processes, FlexRouter explicitly accounts for model correlations and reduces redundant model selections. A coverage-aligned training objective enables learning without requiring a ground-truth target subset while an adaptive greedy inference procedure determines subset sizes on a per-query basis without enforcing a fixed budget. 
Experiments and ablation studies on the \textsc{RouterEval} benchmark demonstrate that FlexRouter achieves higher coverage than strong baselines while maintaining flexible inference cost and lower redundancy, highlighting the benefits of correlation-aware and adaptive subset selection for LLM routing at scale.

\section*{Ethics Statement}
This work studies routing among existing LLMs and does not introduce new models or training data. As such, it inherits potential biases and risks present in the underlying models. By promoting diverse model selection, our method may surface a broader range of outputs, including both useful alternatives and undesirable content. In practice, it should be combined with downstream safeguards such as verification, filtering, or human oversight. Our approach may improve efficiency by reducing redundant model usage, but adaptive routing can also increase cost for difficult queries if not properly controlled.

\bibliography{colm2026_conference}

@misc{barp,
      title={Learning to Route LLMs from Bandit Feedback: One Policy, Many Trade-offs}, 
      author={Wang Wei and Tiankai Yang and Hongjie Chen and Yue Zhao and Franck Dernoncourt and Ryan A. Rossi and Hoda Eldardiry},
      year={2025},
      eprint={2510.07429},
      archivePrefix={arXiv},
      primaryClass={cs.LG},
      url={https://arxiv.org/abs/2510.07429}, 
}

@inproceedings{
embedllm,
title={Embed{LLM}: Learning Compact Representations of Large Language Models},
author={Richard Zhuang and Tianhao Wu and Zhaojin Wen and Andrew Li and Jiantao Jiao and Kannan Ramchandran},
booktitle={The Thirteenth International Conference on Learning Representations},
year={2025},
url={https://openreview.net/forum?id=Fs9EabmQrJ}
}

@inproceedings{routereval,
    title = "{R}outer{E}val: A Comprehensive Benchmark for Routing {LLM}s to Explore Model-level Scaling Up in {LLM}s",
    author = "Huang, Zhongzhan  and
      Ling, Guoming  and
      Lin, Yupei  and
      Chen, Yandong  and
      Zhong, Shanshan  and
      Wu, Hefeng  and
      Lin, Liang",
    editor = "Christodoulopoulos, Christos  and
      Chakraborty, Tanmoy  and
      Rose, Carolyn  and
      Peng, Violet",
    booktitle = "Findings of the Association for Computational Linguistics: EMNLP 2025",
    month = nov,
    year = "2025",
    address = "Suzhou, China",
    publisher = "Association for Computational Linguistics",
    url = "https://aclanthology.org/2025.findings-emnlp.208/",
    doi = "10.18653/v1/2025.findings-emnlp.208",
    pages = "3860--3887",
    ISBN = "979-8-89176-335-7"
}

@inproceedings{huang2025routereval,
  title     = {RouterEval: A Comprehensive Benchmark for Routing {LLM}s to Explore Model-level Scaling Up in {LLM}s},
  author    = {Huang, Zhongzhan and Ling, Guoming and Liang, Vincent S. and Lin, Yupei and Chen, Yandong and Zhong, Shanshan and Wu, Hefeng and Lin, Liang},
  booktitle = {Findings of the Association for Computational Linguistics: {EMNLP} 2025},
  year      = {2025}
}

@article{hu2024routerbench,
  title   = {RouterBench: A Benchmark for Multi-{LLM} Routing System},
  author  = {Hu, Qitian Jason and Bieker, Jacob and Li, Xiuyu and Jiang, Nan and Keigwin, Benjamin and Ranganath, Gaurav and Keutzer, Kurt and Upadhyay, Shriyash Kaustubh},
  journal = {arXiv preprint arXiv:2403.12031},
  year    = {2024}
}

@inproceedings{chen2024routerdc,
  title     = {RouterDC: Query-Based Router by Dual Contrastive Learning for Assembling Large Language Models},
  author    = {Chen, Shuhao and Jiang, Weisen and Lin, Baijiong and Kwok, James T. and Zhang, Yu},
  booktitle = {Advances in Neural Information Processing Systems},
  year      = {2024}
}

@inproceedings{srivatsa2024harnessing,
  title     = {Harnessing the Power of Multiple Minds: Lessons Learned from {LLM} Routing},
  author    = {Srivatsa, Kv Aditya and Maurya, Kaushal Kumar and Kochmar, Ekaterina},
  booktitle = {Proceedings of the Fifth Workshop on Insights from Negative Results in {NLP}},
  year      = {2024},
  doi       = {10.18653/v1/2024.insights-1.15}
}

@article{kulesza2012dppfnt,
  title   = {Determinantal Point Processes for Machine Learning},
  author  = {Kulesza, Alex and Taskar, Ben},
  journal = {Foundations and Trends{\textregistered} in Machine Learning},
  year    = {2012}
}

@article{jacobs1991adaptive,
  title   = {Adaptive Mixtures of Local Experts},
  author  = {Jacobs, Robert A. and Jordan, Michael I. and Nowlan, Steven J. and Hinton, Geoffrey E.},
  journal = {Neural Computation},
  volume  = {3},
  number  = {1},
  pages   = {79--87},
  year    = {1991},
  doi     = {10.1162/neco.1991.3.1.79}
}

@inproceedings{shazeer2017outrageously,
  title     = {Outrageously Large Neural Networks: The Sparsely-Gated Mixture-of-Experts Layer},
  author    = {Shazeer, Noam and Mirhoseini, Azalia and Maziarz, Krzysztof and Davis, Andy and Le, Quoc and Hinton, Geoffrey and Dean, Jeff},
  booktitle = {International Conference on Learning Representations ({ICLR})},
  year      = {2017},
  url       = {https://arxiv.org/abs/1701.06538}
}

@article{cobbe2021gsm8k,
  title={Training verifiers to solve math word problems},
  author={Cobbe, Karl and Kosaraju, Vineet and Bavarian, Mohammad and Chen, Mark and Jun, Heewoo and Kaiser, Lukasz and Plappert, Matthias and Tworek, Jerry and Hilton, Jacob and Nakano, Reiichiro and others},
  journal={arXiv preprint arXiv:2110.14168},
  year={2021}
}

@article{hendrycks2021mmlu,
  title={Measuring massive multitask language understanding},
  author={Hendrycks, Dan and Burns, Collin and Basart, Steven and Zou, Andy and Mazeika, Mantas and Song, Dawn and Steinhardt, Jacob},
  journal={arXiv preprint arXiv:2009.03300},
  year={2021}
}

@article{clark2018arc,
  title={Think you have solved question answering? try arc, the ai2 reasoning challenge},
  author={Clark, Peter and Cowhey, Isaac and Etzioni, Oren and Khot, Tushar and Sabharwal, Ashish and Schoenick, Carissa and Tafjord, Oyvind},
  journal={arXiv preprint arXiv:1803.05457},
  year={2018}
}

@article{sakaguchi2021winogrande,
  title={WinoGrande: An Adversarial Winograd Schema Challenge at Scale},
  author={Sakaguchi, Keisuke and Le Bras, Ronan and Bhagavatula, Chandra and Choi, Yejin},
  journal={Communications of the ACM},
  year={2021}
}

@article{zellers2019hellaswag,
  title={Hellaswag: Can a machine really finish your sentence?},
  author={Zellers, Rowan and Holtzman, Ari and Bisk, Yonatan and Farhadi, Ali and Choi, Yejin},
  journal={arXiv preprint arXiv:1905.07830},
  year={2019}
}

@misc{truthfulqa,
      title={TruthfulQA: Measuring How Models Mimic Human Falsehoods}, 
      author={Stephanie Lin and Jacob Hilton and Owain Evans},
      year={2022},
      eprint={2109.07958},
      archivePrefix={arXiv},
      primaryClass={cs.CL},
      url={https://arxiv.org/abs/2109.07958}, 
}

@misc{musr,
      title={MuSR: Testing the Limits of Chain-of-thought with Multistep Soft Reasoning}, 
      author={Zayne Sprague and Xi Ye and Kaj Bostrom and Swarat Chaudhuri and Greg Durrett},
      year={2024},
      eprint={2310.16049},
      archivePrefix={arXiv},
      primaryClass={cs.CL},
      url={https://arxiv.org/abs/2310.16049}, 
}

@misc{ifeval,
      title={Instruction-Following Evaluation for Large Language Models}, 
      author={Jeffrey Zhou and Tianjian Lu and Swaroop Mishra and Siddhartha Brahma and Sujoy Basu and Yi Luan and Denny Zhou and Le Hou},
      year={2023},
      eprint={2311.07911},
      archivePrefix={arXiv},
      primaryClass={cs.CL},
      url={https://arxiv.org/abs/2311.07911}, 
}

@misc{gpqa,
      title={GPQA: A Graduate-Level Google-Proof Q\&A Benchmark}, 
      author={David Rein and Betty Li Hou and Asa Cooper Stickland and Jackson Petty and Richard Yuanzhe Pang and Julien Dirani and Julian Michael and Samuel R. Bowman},
      year={2023},
      eprint={2311.12022},
      archivePrefix={arXiv},
      primaryClass={cs.AI},
      url={https://arxiv.org/abs/2311.12022}, 
}

@misc{math,
      title={Measuring Mathematical Problem Solving With the MATH Dataset}, 
      author={Dan Hendrycks and Collin Burns and Saurav Kadavath and Akul Arora and Steven Basart and Eric Tang and Dawn Song and Jacob Steinhardt},
      year={2021},
      eprint={2103.03874},
      archivePrefix={arXiv},
      primaryClass={cs.LG},
      url={https://arxiv.org/abs/2103.03874}, 
}

@article{bbh,
  title={Challenging BIG-Bench Tasks and Whether Chain-of-Thought Can Solve Them},
  author={Suzgun, Mirac and Scales, Nathan and Sch{\"a}rli, Nathanael and Gehrmann, Sebastian and Tay, Yi and Chung, Hyung Won and Chowdhery, Aakanksha and Le, Quoc V and Chi, Ed H and Zhou, Denny and and Wei, Jason},
  journal={arXiv preprint arXiv:2210.09261},
  year={2022}
}

@misc{gpt4,
      title={GPT-4 Technical Report}, 
      author={OpenAI and Josh Achiam and Steven Adler and Sandhini Agarwal and others},
      year={2024},
      eprint={2303.08774},
      archivePrefix={arXiv},
      primaryClass={cs.CL},
      url={https://arxiv.org/abs/2303.08774}, 
}

@inproceedings{han2017faster_map_greedy_dpp,
  title={Faster greedy MAP inference for determinantal point processes},
  author={Han, Insu and Kambadur, Prabhanjan and Park, Kyoungsoo and Shin, Jinwoo},
  booktitle={International Conference on Machine Learning},
  pages={1384--1393},
  year={2017},
  organization={PMLR}
}

@article{chen2023frugalgpt,
  title={Frugalgpt: How to use large language models while reducing cost and improving performance},
  author={Chen, Lingjiao and Zaharia, Matei and Zou, James},
  journal={arXiv preprint arXiv:2305.05176},
  year={2023}
}

@inproceedings{jiang2023llm_blender,
  title={Llm-blender: Ensembling large language models with pairwise ranking and generative fusion},
  author={Jiang, Dongfu and Ren, Xiang and Lin, Bill Yuchen},
  booktitle={Proceedings of the 61st Annual Meeting of the Association for Computational Linguistics (Volume 1: Long Papers)},
  pages={14165--14178},
  year={2023}
}

@article{wang2022self_consistency,
  title={Self-consistency improves chain of thought reasoning in language models},
  author={Wang, Xuezhi and Wei, Jason and Schuurmans, Dale and Le, Quoc and Chi, Ed and Narang, Sharan and Chowdhery, Aakanksha and Zhou, Denny},
  journal={arXiv preprint arXiv:2203.11171},
  year={2022}
}

@article{wang2024mixture_of_agents,
  title={Mixture-of-agents enhances large language model capabilities},
  author={Wang, Junlin and Wang, Jue and Athiwaratkun, Ben and Zhang, Ce and Zou, James},
  journal={arXiv preprint arXiv:2406.04692},
  year={2024}
}

@article{gillenwater2012near_optimal_dpp,
  title={Near-optimal map inference for determinantal point processes},
  author={Gillenwater, Jennifer and Kulesza, Alex and Taskar, Ben},
  journal={Advances in Neural Information Processing Systems},
  volume={25},
  year={2012}
}

@article{civril2009selecting_map_np,
  title={On selecting a maximum volume sub-matrix of a matrix and related problems},
  author={Civril, Ali and Magdon-Ismail, Malik},
  journal={Theoretical Computer Science},
  volume={410},
  number={47-49},
  pages={4801--4811},
  year={2009},
  publisher={Elsevier}
}

@inproceedings{affandi2014learning_information_retrieval,
  title={Learning the parameters of determinantal point process kernels},
  author={Affandi, Raja Hafiz and Fox, Emily and Adams, Ryan and Taskar, Ben},
  booktitle={International Conference on Machine Learning},
  pages={1224--1232},
  year={2014},
  organization={PMLR}
}

@inproceedings{deng2020personalized_information_retrieval,
  title={Personalized bundle recommendation in online games},
  author={Deng, Qilin and Wang, Kai and Zhao, Minghao and Zou, Zhene and Wu, Runze and Tao, Jianrong and Fan, Changjie and Chen, Liang},
  booktitle={Proceedings of the 29th ACM international conference on information \& knowledge management},
  pages={2381--2388},
  year={2020}
}

@inproceedings{wilhelm2018practical_recommendation,
  title={Practical diversified recommendations on youtube with determinantal point processes},
  author={Wilhelm, Mark and Ramanathan, Ajith and Bonomo, Alexander and Jain, Sagar and Chi, Ed H and Gillenwater, Jennifer},
  booktitle={Proceedings of the 27th ACM International Conference on Information and Knowledge Management},
  pages={2165--2173},
  year={2018}
}

@inproceedings{cho2019multi_document,
  title={Multi-document summarization with determinantal point processes and contextualized representations},
  author={Cho, Sangwoo and Li, Chen and Yu, Dong and Foroosh, Hassan and Liu, Fei},
  booktitle={Proceedings of the 2nd Workshop on New Frontiers in Summarization},
  pages={98--103},
  year={2019}
}

@article{han2021dynamic,
  title={Dynamic neural networks: A survey},
  author={Han, Yizeng and Huang, Gao and Song, Shiji and Yang, Le and Wang, Honghui and Wang, Yulin},
  journal={IEEE transactions on pattern analysis and machine intelligence},
  volume={44},
  number={11},
  pages={7436--7456},
  year={2021},
  publisher={IEEE}
}

@article{nemhauser1978analysis_submodular,
  title={An analysis of approximations for maximizing submodular set functions—I},
  author={Nemhauser, George L and Wolsey, Laurence A and Fisher, Marshall L},
  journal={Mathematical programming},
  volume={14},
  number={1},
  pages={265--294},
  year={1978},
  publisher={Springer}
}

@article{dietterich1997solving_set_value_multi_instance,
  title={Solving the multiple instance problem with axis-parallel rectangles},
  author={Dietterich, Thomas G and Lathrop, Richard H and Lozano-P{\'e}rez, Tom{\'a}s},
  journal={Artificial intelligence},
  volume={89},
  number={1-2},
  pages={31--71},
  year={1997},
  publisher={Elsevier}
}

@article{cour2011learning_partial,
  title={Learning from partial labels},
  author={Cour, Timothee and Sapp, Ben and Taskar, Ben},
  journal={The Journal of Machine Learning Research},
  volume={12},
  pages={1501--1536},
  year={2011},
  publisher={JMLR. org}
}
\bibliographystyle{colm2026_conference}

\clearpage
\appendix
\section*{Appendix}
\setcounter{equation}{0}

\section{Alternative Similarity Modeling}
\label{app:rbf}
\paragraph{Gaussian.} 
We model the correlation between models using a Gaussian Radial Basis Function (RBF) kernel. This captures the intuition that models with similar embeddings (close in the functional space) are redundant. We define the similarity matrix ${K} \in \R^{M \times M}$ as:
\begin{equation}
    K_{ij} = \exp\left( - \gamma \| {u}_i - {u}_j \|_2^2 \right),
\end{equation}
where ${u}_i, {u}_j$ are the learnable model embeddings and $\gamma > 0$ is a trainable scaling parameter (inverse bandwidth). This formulation guarantees that ${K}$ is positive semi-definite and bounded between $(0, 1]$.

\section{Theoretical Results}
\label{theory_results}
 
\begin{proposition}[Submodularity of Log-Determinant]
\label{thm:submodular}
Let $L_x \succ 0$ be a positive definite kernel matrix over $\mathcal{M}$. Define $f(S) = \log \det((L_x)_S)$ for $S \subseteq \mathcal{M}$, where $(L_x)_S$ denotes the principal submatrix of $L_x$ indexed by $S$, and $f(\emptyset) = 0$ by convention. Then $f$ is submodular: for all $S \subseteq T \subseteq \mathcal{M}$ and $i \notin T$,
\begin{equation}
    f(S \cup \{i\}) - f(S) \geq f(T \cup \{i\}) - f(T).
\end{equation}
However, $f$ is \emph{not} monotone in general. The marginal gain of adding model $i$ to set $S$ is
\begin{equation}
    f(S \cup \{i\}) - f(S) = \log (L_x)_{ii \cdot S},
\end{equation}
where $(L_x)_{ii \cdot S} = (L_x)_{ii} - (L_x)_{i,S}[(L_x)_S]^{-1}(L_x)_{S,i}$ is the Schur complement. This quantity is positive (since $L_x \succ 0$) but may be less than one, in which case $\log(L_x)_{ii\cdot S} < 0$ and $f$ decreases.
\end{proposition}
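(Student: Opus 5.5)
The plan is to derive the marginal-gain formula first and then obtain submodularity from it. Everything follows from a Schur complement computation together with the standard fact that conditioning a positive definite matrix on more variables can only shrink a conditional variance.

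\textbf{Step 1 (marginal gain).} For $S\subseteq\mathcal M$ and $i\notin S$, order the indices of $(L_x)_{S\cup\{i\}}$ with $S$ first and $i$ last, giving the block form
\begin{equation*}
(L_x)_{S\cup\{i\}}=\begin{pmatrix}(L_x)_S & (L_x)_{S,i}\\ (L_x)_{i,S} & (L_x)_{ii}\end{pmatrix}.
\end{equation*}
Since $L_x\succ 0$, every principal submatrix is positive definite, so $(L_x)_S$ is invertible. The block determinant formula then gives $\det((L_x)_{S\cup\{i\}})=\det((L_x)_S)\,(L_x)_{ii\cdot S}$. Taking logarithms yields $f(S\cup\{i\})-f(S)=\log(L_x)_{ii\cdot S}$; for $S=\emptyset$ this reduces to $\log (L_x)_{ii}$, consistent with $f(\emptyset)=0$. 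The Schur complement of a positive definite matrix is positive definite, so $(L_x)_{ii\cdot S}>0$ and the logarithm is well defined.

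\textbf{Step 2 (submodularity).} By Step 1, it suffices to show that $(L_x)_{ii\cdot S}\ge (L_x)_{ii\cdot T}$ whenever $S\subseteq T$ and $i\notin T$, because $\log$ is increasing. I would use the Gram/projection interpretation. Factor $L_x=B^\top B$ with columns $b_j$. Then $(L_x)_{ii\cdot S}=\|b_i-P_S b_i\|^2$, where $P_S$ is the orthogonal projection onto $\mathrm{span}\{b_j:j\in S\}$. Because $\mathrm{span}(S)\subseteq\mathrm{span}(T)$, the distance from $b_i$ to the larger subspace is no larger, which gives the inequality. An equivalent route is the Gaussian one: $(L_x)_{ii\cdot S}=\mathrm{Var}(z_i\mid z_S)$ for $z\sim\mathcal N(0,L_x)$, and conditioning on more variables cannot increase a conditional variance. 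The main point to check is that the projection identity for the Schur complement holds. This follows because $P_S b_i=B_S(B_S^\top B_S)^{-1}B_S^\top b_i$, and expanding $\|b_i-P_Sb_i\|^2$ gives exactly $(L_x)_{ii}-(L_x)_{i,S}[(L_x)_S]^{-1}(L_x)_{S,i}$.

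\textbf{Step 3 (non-monotonicity).} A single counterexample suffices. Take $M=2$ and
\begin{equation*}
L_x=\begin{pmatrix}a^2 & a^2\rho\\ a^2\rho & a^2\end{pmatrix},\qquad 0<a<1,\ |\rho|<1,
\end{equation*}
which is positive definite. Then $f(\{1\})-f(\emptyset)=\log a^2<0$, so already a singleton has smaller value than the empty set. Alternatively, with $a=1$ and $\rho$ close to $1$, the gain from adding the second model is $\log(1-\rho^2)<0$. This matches the intended reading in the kernel $L_x=\mathrm{diag}(q)K\mathrm{diag}(q)$ with $q_i<1$ and highly correlated models. The remark that the gain can be less than one, so that its logarithm is negative, follows directly from this example. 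Overall, Step 2 is the only step that needs real justification, and the projection argument settles it.
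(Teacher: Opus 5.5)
Your proof is correct and follows the same route as the paper: the bordered-determinant (Schur complement) identity for the marginal gain, then monotonicity of the Schur complement as the conditioning set grows, then taking logarithms. Your projection argument and explicit $2\times 2$ counterexample make rigorous two points the paper only asserts, namely that ``conditioning on a larger set can only reduce the residual variance'' and that ``nothing prevents $(L_x)_{ii\cdot S}<1$.''
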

 
\begin{proof}
For any $S \subseteq \mathcal{M}$ and $i \notin S$, the determinant formula for bordered matrices gives
\begin{equation}
    \frac{\det((L_x)_{S \cup \{i\}})}{\det((L_x)_S)} = (L_x)_{ii} - (L_x)_{i,S} [(L_x)_S]^{-1} (L_x)_{S,i} = (L_x)_{ii \cdot S}.
\end{equation}
For $S \subseteq T$, the Schur complement satisfies $(L_x)_{ii \cdot T} \leq (L_x)_{ii \cdot S}$, since conditioning on a larger set can only reduce the residual variance in a positive semi-definite matrix. Taking logarithms preserves the inequality, establishing submodularity.
 
Since $L_x \succ 0$, all Schur complements are strictly positive, so $f(S \cup \{i\}) - f(S) = \log(L_x)_{ii\cdot S} > -\infty$. However, nothing prevents $(L_x)_{ii \cdot S} < 1$. 
\end{proof}

The non-monotonicity of $f(S) = \log\det((L_x)_S)$ has a natural interpretation in the routing context: adding a model that is highly correlated with the existing selection can reduce the overall determinant, reflecting the fact that redundant coverage does not improve the DPP objective.

\bigskip
 
\begin{proposition}[Greedy Determinant Maximality with Adaptive Stopping]
\label{thm:greedy}
Let $L_x \succ 0$ and let $f(S) = \log\det((L_x)_S)$ as above. Consider the greedy algorithm that initializes $S = \emptyset$ and iteratively selects
\begin{equation}
    i^* = \arg\max_{i \notin S}\; g_i(S) = \arg\max_{i \notin S}\; (L_x)_{ii \cdot S},
\end{equation}
terminating when $g_{i^*}(S) \leq 1$ for the best remaining candidate. Let $S_{\mathrm{greedy}}$ denote the output. Then for all $T \supseteq S_{\mathrm{greedy}}$ with $T \neq S_{\mathrm{greedy}}$,
\begin{equation}
    \det((L_x)_T) \leq \det((L_x)_{S_{\mathrm{greedy}}}).
\end{equation}
That is, $S_{\mathrm{greedy}}$ globally maximizes $\det((L_x)_S)$ over all supersets of itself.
\end{proposition}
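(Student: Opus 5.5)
The plan is to compare $\det((L_x)_T)$ with $\det((L_x)_{S_{\mathrm{greedy}}})$ by adding the extra elements of $T$ one at a time. Each multiplicative step is a Schur complement, and Proposition~\ref{thm:submodular} controls each one by the corresponding gain at $S_{\mathrm{greedy}}$. The stopping rule says every such gain is at most one.

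\textbf{Setup.} Write $S = S_{\mathrm{greedy}}$. If the algorithm terminated because $S = \mathcal{M}$, there is no proper superset and the claim is vacuous. Otherwise termination means
\[
\max_{i \notin S} (L_x)_{ii\cdot S} \le 1,
\]
so $g_i(S) \le 1$ for every $i \notin S$. Fix $T \supsetneq S$ and enumerate $T \setminus S = \{j_1, \dots, j_m\}$ in an arbitrary order. Set $S_0 = S$ and $S_t = S \cup \{j_1,\dots,j_t\}$. Since $L_x \succ 0$, every principal minor is strictly positive, so all ratios below are well defined.

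\textbf{Telescoping.} The bordered-determinant identity from the proof of Proposition~\ref{thm:submodular} gives
\[
\frac{\det((L_x)_T)}{\det((L_x)_S)} = \prod_{t=1}^{m} \frac{\det((L_x)_{S_t})}{\det((L_x)_{S_{t-1}})} = \prod_{t=1}^{m} (L_x)_{j_t j_t \cdot S_{t-1}}.
\]
Each factor is strictly positive. Because $S \subseteq S_{t-1}$ and $j_t \notin S_{t-1}$, the Schur-complement monotonicity established in Proposition~\ref{thm:submodular} (equivalently, submodularity of $f$) gives
\[
(L_x)_{j_t j_t \cdot S_{t-1}} \le (L_x)_{j_t j_t \cdot S} = g_{j_t}(S) \le 1.
\]
The product of numbers in $(0,1]$ is at most one, so $\det((L_x)_T) \le \det((L_x)_S)$. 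Since $T$ was an arbitrary proper superset, $S$ maximizes the determinant over all its supersets, itself included.

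\textbf{Main obstacle.} The only nontrivial ingredient is the inequality $(L_x)_{ii\cdot T} \le (L_x)_{ii\cdot S}$ for $S \subseteq T$, which I take from Proposition~\ref{thm:submodular}. If a self-contained justification were wanted, I would view $(L_x)_{ii\cdot S}$ as the squared residual norm of the Gram vector of $i$ after projecting onto the span of the vectors indexed by $S$. Enlarging the span can only shrink this residual.

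The argument never uses that $S$ was built greedily, only that the stopping criterion holds at $S$. The result is therefore a local-optimality statement about supersets, not a global MAP guarantee.
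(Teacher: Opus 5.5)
Your proposal is correct and follows the paper's proof step for step. You telescope $\det((L_x)_T)/\det((L_x)_{S_{\mathrm{greedy}}})$ into a product of Schur complements, bound each factor by the corresponding gain at $S_{\mathrm{greedy}}$ via the monotonicity from Proposition~\ref{thm:submodular}, and use the stopping rule to cap each gain at one. Your two additions, the case where the algorithm ends at $\mathcal{M}$ and the remark that only the stopping condition (not greediness) is used, are accurate points the paper leaves implicit.
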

 
\begin{proof}
When the algorithm terminates, $g_i(S_{\mathrm{greedy}}) \leq 1$ for all $i \notin S_{\mathrm{greedy}}$, meaning $(L_x)_{ii \cdot S_{\mathrm{greedy}}} \leq 1$. For any superset $T = S_{\mathrm{greedy}} \cup \{i_1, \ldots, i_r\}$, the determinant telescopes as
\begin{equation}
    \det((L_x)_T) = \det((L_x)_{S_{\mathrm{greedy}}}) \prod_{j=1}^{r} (L_x)_{i_j i_j \cdot S_{\mathrm{greedy}} \cup \{i_1,\ldots,i_{j-1}\}}.
\end{equation}
By submodularity of $f$ (Theorem~\ref{thm:submodular}), $(L_x)_{i_j i_j \cdot S_{\mathrm{greedy}} \cup \{i_1,\ldots,i_{j-1}\}} \leq (L_x)_{i_j i_j \cdot S_{\mathrm{greedy}}} \leq 1$, so each factor in the product is at most $1$, giving $\det((L_x)_T) \leq \det((L_x)_{S_{\mathrm{greedy}}})$.
\end{proof}

This shows that the greedy algorithm with adaptive stopping finds a set that no superset can improve upon. In particular, $S_{\mathrm{greedy}}$ is a global maximizer of $\det((L_x)_S)$ over all sets $S \supseteq S_{\mathrm{greedy}}$, including $\mathcal{M}$ itself. 

\bigskip

\begin{proposition}[Coverage-Diversity Equivalence]
\label{prop:coverage-diversity}
Let $C_x \subseteq \mathcal{M}$ denote the correct set and $F_x = \mathcal{M} \setminus C_x$ the failure set. The hit probability decomposes as
\begin{equation}
    P_{\mathrm{succ}}(x) = 1 - P(Y \subseteq F_x) = \sum_{\substack{S \subseteq \mathcal{M} \\ S \cap C_x \neq \emptyset}} \mathcal{P}_{L_x}(S).
\end{equation}
Consequently, minimizing the coverage loss $\mathcal{L}_{\mathrm{hit}}(x)$ is equivalent to maximizing the total DPP probability mass assigned to subsets that contain at least one correct model.
\end{proposition}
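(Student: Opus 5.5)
The argument works entirely at the level of the DPP as a probability distribution over subsets: partition the subsets of $\mathcal{M}$ according to whether they meet $C_x$, then use the complement rule together with the closed form for $P(Y \subseteq F_x)$ stated in the main text.

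\textbf{Step 1: normalization.} First note that $\mathcal{P}_{L_x}$ is a genuine probability distribution on $2^{\mathcal{M}}$. This rests on the standard identity
\[
\sum_{S \subseteq \mathcal{M}} \det((L_x)_S) = \det(I + L_x),
\]
which follows from expanding $\det(I+L_x)$ multilinearly in its columns (equivalently, summing all principal minors), with $\det((L_x)_\emptyset)=1$.

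\textbf{Step 2: partition and complement.} Every $S \subseteq \mathcal{M}$ either satisfies $S \cap C_x = \emptyset$, which is equivalent to $S \subseteq F_x$ because $F_x = \mathcal{M}\setminus C_x$, or satisfies $S \cap C_x \neq \emptyset$, and exactly one of the two holds. Summing $\mathcal{P}_{L_x}$ over both classes and using Step 1 gives
\[
1 = P(Y \subseteq F_x) + \sum_{S \cap C_x \neq \emptyset} \mathcal{P}_{L_x}(S).
\]
This is the claimed decomposition.

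\textbf{Step 3: consistency with the closed form.} For completeness, check that $P(Y \subseteq F_x)$ equals $\det(I+(L_x)_{F_x})/\det(I+L_x)$. Apply the identity from Step 1 to the principal submatrix $(L_x)_{F_x}$: its principal minors are exactly $\det((L_x)_S)$ for $S \subseteq F_x$, so their sum is $\det(I + (L_x)_{F_x})$. Hence $P_{\mathrm{succ}}(x)$ as defined in the main text coincides with the displayed sum.

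\textbf{Step 4: the "consequently" claim.} Since $\mathcal{L}_{\mathrm{hit}}(x) = -\log P_{\mathrm{succ}}(x)$ and $-\log$ is strictly decreasing on $(0,1]$, minimizing $\mathcal{L}_{\mathrm{hit}}$ over the parameters is equivalent to maximizing $P_{\mathrm{succ}}(x)$, that is, the DPP mass on subsets that hit $C_x$. For a dataset loss summed over $x$, "equivalent" should be read per query, or as maximizing the sum of log-masses.

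The degenerate case $C_x = \emptyset$ should be noted separately. There the mass is $0$ and the loss is $+\infty$, so the equivalence is only meaningful when $C_x \neq \emptyset$.

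\textbf{Main obstacle.} The only nontrivial ingredient is the principal-minor expansion in Step 1, which I would cite as standard (Kulesza–Taskar) rather than rederive. Everything else is bookkeeping.
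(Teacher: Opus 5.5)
Your proof is correct and follows the paper's own argument: split the subsets according to whether they meet $C_x$, use $S\cap C_x=\emptyset \iff S\subseteq F_x$ together with total mass one, and then use that $-\log$ is decreasing. Steps 1 and 3 (normalization via the principal-minor expansion, and the check of the closed form $\det(I+(L_x)_{F_x})/\det(I+L_x)$) and your remark on $C_x=\emptyset$ spell out details the paper takes for granted; they do not change the route.
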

 
\begin{proof}
By the law of total probability,
\begin{equation}
    1 = \sum_{S \subseteq \mathcal{M}} \mathcal{P}_{L_x}(S) = \sum_{\substack{S \subseteq \mathcal{M} \\ S \cap C_x \neq \emptyset}} \mathcal{P}_{L_x}(S) + \sum_{\substack{S \subseteq \mathcal{M} \\ S \cap C_x = \emptyset}} \mathcal{P}_{L_x}(S).
\end{equation}
The second sum equals $P(Y \subseteq F_x)$ since $S \cap C_x = \emptyset$ if and only if $S \subseteq F_x$. Therefore,
\begin{equation}
    P_{\mathrm{succ}}(x) = 1 - P(Y \subseteq F_x) = \sum_{\substack{S \subseteq \mathcal{M} \\ S \cap C_x \neq \emptyset}} \mathcal{P}_{L_x}(S).
\end{equation}
Since $\mathcal{L}_{\mathrm{hit}}(x) = -\log P_{\mathrm{succ}}(x)$ and $-\log$ is monotone decreasing, minimizing $\mathcal{L}_{\mathrm{hit}}(x)$ is equivalent to maximizing $P_{\mathrm{succ}}(x)$.
\end{proof}

\section{Experiment Details}

\begin{table}[t]
\centering
\caption{Effect of the relative stopping threshold $\tau$ on coverage and inference cost
  (combined ID test, large-pool setting, $k_{\max} = 10$).
  $\tau$ is expressed as a fraction of the first marginal log-determinant gain.
  Size Reduction and Coverage Drop are relative to $\tau = 0$.}
\label{tab:tau_ablation}
\begin{tabular}{c c c r r}
\toprule
$\tau$ & Avg Subset Size & Success@10 & \!\!Size Reduction & \!\!Coverage Drop \\
\midrule
0.000 & 10.00 & 0.9885 &  0.0\,\% &  0.00\,\% \\
0.001 & 10.00 & 0.9885 &  0.0\,\% &  0.00\,\% \\
0.010 & 10.00 & 0.9885 &  0.0\,\% &  0.00\,\% \\
0.050 & 10.00 & 0.9885 &  0.0\,\% &  0.00\,\% \\
0.100 &  9.95 & 0.9883 &  0.5\,\% &  0.02\,\% \\
0.200 &  6.41 & 0.9772 & 35.9\,\% &  1.13\,\% \\
0.300 &  4.25 & 0.9562 & 57.5\,\% &  3.24\,\% \\
0.500 &  1.41 & 0.8828 & 85.9\,\% & 10.57\,\% \\
\bottomrule
\end{tabular}
\end{table}

\subsection{Benchmarks and Datasets}
\label{app:data}
We conduct experiments on \textbf{RouterEval}~\citep{routereval}, a large-scale benchmark designed for evaluating LLM routing methods. RouterEval aggregates multiple widely used reasoning and knowledge benchmarks and records the performance of thousands of candidate LLMs on each prompt. For every $(\text{query}, \text{model})$ pair, the benchmark provides a binary correctness label indicating whether the candidate model answers the query correctly. This design enables systematic evaluation of routing algorithms under large candidate model pools.

Our experiments cover two RouterEval settings: the \textit{medium-pool} setting and the \textit{large-pool} setting. The medium-pool setting contains \textbf{3811} candidate LLMs, while the large-pool setting contains \textbf{5000} candidate LLMs. Within each setting, all tasks share the same candidate model pool, enabling consistent comparison across routing methods.

For the \textbf{medium-pool setting}, we use three tasks for in-domain training and evaluation: \textbf{BBH}~\citep{bbh}, \textbf{MATH}~\citep{math}, and \textbf{GPQA}~\citep{gpqa}. We evaluate out-of-domain generalization on two additional tasks that are not observed during training: \textbf{IFEval}~\citep{ifeval} and \textbf{MuSR}~\citep{musr}. These tasks cover diverse reasoning settings including multi-step reasoning, mathematical reasoning, graduate-level question answering, instruction-following evaluation, and multi-step reasoning over structured contexts.

For the \textbf{large-pool setting}, the in-domain tasks include \textbf{MMLU}~\citep{hendrycks2021mmlu}, \textbf{HellaSwag}~\citep{zellers2019hellaswag}, \textbf{GSM8K}~\citep{cobbe2021gsm8k}, and \textbf{ARC}~\citep{clark2018arc}. Out-of-domain evaluation is conducted on \textbf{TruthfulQA}~\citep{truthfulqa} and \textbf{WinoGrande}~\citep{sakaguchi2021winogrande}. These datasets span knowledge-intensive reasoning, commonsense inference, mathematical problem solving, and truthfulness evaluation. Table~\ref{tab:routereval_datasets} summarizes the datasets used in our experiments.

Following prior work, we evaluate FlexRouter both in-domain and out-of-domain.
For the in-domain tasks, the router is evaluated on the held-out 20\% test splits of the tasks used for training, measuring its ability to handle unseen instances from familiar tasks.
For the out-of-domain tasks, the router is trained on the in-domain tasks but evaluated on additional tasks that are entirely excluded from training. This setup assesses the router's ability to generalize to previously unseen tasks.

\begin{table*}[t]
\centering
\small
\footnotesize
\scriptsize
\begin{tabular}{lccc}
\toprule
Dataset & Category & \#Prompts & \#LLMs\\
\midrule
BBH & Complex Reasoning & 5761 & 3811 \\
MATH & Mathematical Reasoning & 1324 & 3811 \\
GPQA & Graduate-level QA & 1192 & 3811 \\
IFEval & Instruction Following & 541 & 3811 \\
MuSR & Multi-step Reasoning & 756 & 3811 \\
\midrule
MMLU & Knowledge & 14042 & 5000 \\
HellaSwag & Commonsense Reasoning & 10042 & 5000 \\
GSM8K & Mathematical Reasoning & 1319 & 5000 \\
ARC & Science QA & 1172 & 5000 \\
TruthfulQA & Truthfulness Evaluation & 817 & 5000 \\
WinoGrande & Commonsense Reasoning & 1267 & 5000 \\
\bottomrule
\end{tabular}
\caption{Statistics of datasets used in our work. Each dataset consists of prompts evaluated across a shared pool of candidate LLMs within each setting. 
}
\label{tab:routereval_datasets}
\end{table*}

\subsection{Baselines}
\label{app:baseline}
We compare FlexRouter with several strong routing baselines that represent commonly used model selection strategies using \textbf{Independent scoring}, which ranks candidate models according to predicted correctness scores and selects the top-$k$ models with the highest scores. Each model is evaluated independently, and correlations between models are not considered during selection. This setting represents a common strategy used in many routing and ensemble systems where models are chosen solely based on their estimated accuracy.

\textbf{EmbedLLM}~\citep{embedllm} learns query-dependent scores by jointly embedding queries and candidate models into a shared representation space. The router predicts the likelihood that each model will answer a query correctly based on these embeddings. However, model selection is still performed independently for each candidate, without explicitly modeling correlations between models.

\textbf{BaRP}~\citep{barp} is a learned routing method that models routing as a multi-objective contextual bandit problem. It uses REINFORCE on bandit feedback to learn a routing policy that balances performance and cost when selecting models. BaRP serves as a strong learned baseline for adaptive model selection under resource constraints.

\textbf{Reference model (Ref.\ score).}  
The Ref.\ score corresponds to the performance of a representative high-capacity LLM evaluated directly on the benchmark without routing. This reference model is not a routing method and does not perform model selection. Instead, it reflects the performance of a single strong model such as GPT-4 \citep{gpt4}. The reference score provides a useful point of comparison because it represents the outcome of always relying on a single capable model.

For all baselines, we evaluate performance under identical inference budgets. Each routing method selects the same number of candidate models per query, ensuring that performance differences reflect the quality of the routing strategy rather than differences in computational cost.

\subsection{Evaluation Metrics}
\label{app:metrics}
We evaluate routing performance using metrics that capture both success probability and model diversity.

\textbf{Success@$k$} measures the fraction of queries for which the selected subset contains at least one model that produces a correct answer. This metric directly reflects the routing objective of maximizing the probability that at least one selected model succeeds.

\begin{equation}
  \text{Success@}k
  \;=\;
  \frac{1}{|\mathcal{Q}|}\sum_{q \in \mathcal{Q}}
  \mathbf{1}\!\left[\,\exists\, m \in \mathcal S_q : y_{q,m} = 1 \right].
  \label{eq:success_at_k}
\end{equation}

Here $\mathcal{Q}$ denotes the set of queries, $\mathcal{S}_q$ is the subset of models selected for query $q$, and $y_{q,m} \in \{0,1\}$ indicates whether model $m$ answers query $q$ correctly.

\textbf{ILD@$k$} (Intra-List Diversity) measures the average pairwise cosine distance among the embeddings of the selected models. Larger values indicate lower redundancy and greater complementarity among the selected models.

\begin{equation}
  \text{ILD@}k
  \;=\;
  \frac{1}{|\mathcal{Q}|}\sum_{q \in \mathcal{Q}}
  \frac{1}{\binom{|\mathcal{S}_q|}{2}}
  \sum_{\substack{i,j \,\in\, \mathcal{S}_q \\ i < j}}
  \left(1 - \frac{\mathbf{u}_i^\top \mathbf{u}_j}{\|\mathbf{u}_i\|\,\|\mathbf{u}_j\|}\right),
  \label{eq:ild_at_k}
\end{equation}

where $\mathbf{u}_m$ denotes the embedding of model $m$. The metric ranges from $0$ to $2$, where $0$ indicates that all selected models are identical in the embedding space and $2$ indicates that model embeddings are maximally dissimilar. We compute ILD only for queries with at least two selected models.
To ensure fair comparison across routing methods, the embeddings $\mathbf{u}_m$ are taken from a fixed pre-trained embedding space derived from model metadata. As a result, ILD reflects model diversity independently of each method's internal scoring mechanism.

\subsection{Implementation Details}
\label{app:imp}
FlexRouter uses a shared query encoder together with learnable model embeddings to parameterize the routing policy. 
Queries are encoded using \texttt{RoBERTa-base}, and both query and model representations are projected into a shared latent space with dimension 128. 
The resulting representations are used to construct the query-dependent DPP kernel that defines the subset selection distribution.

The model is trained using the Adam optimizer with a learning rate of $10^{-3}$. 
Training proceeds for up to 100 epochs with early stopping based on validation Success@$k$, using a patience of 15 epochs. The training objective combines the coverage-oriented objective and an auxiliary binary cross-entropy loss. 
The weight for auxiliary supervision is set to $\lambda = 1.0$ for all experiments.

During inference, FlexRouter selects models using a greedy maximum a posteriori procedure based on marginal log-determinant gains of the DPP objective. 
Model selection proceeds iteratively and stops when the marginal gain becomes non-positive or when the maximum subset size $k=10$ is reached. 
This adaptive stopping rule allows the router to adjust the number of selected models according to query difficulty while maintaining a bounded inference budget. All experiments were conducted on NVIDIA A100 80GB GPUs.

While the kernel is of size $M \times M$, greedy inference only requires computing marginal gains with respect to the current subset and maintaining a Cholesky factorization, resulting in $O(k^2 M)$ complexity. In practice, we further restrict evaluation to a top-$K_0$ candidate pool for efficiency.

\section{Additional Results}
The results in this section are from additional runs conducted for further analysis. They follow the same evaluation protocol as the main experiments, but may differ slightly from Tables~1-2 due to stochastic training and rerunning the models. The qualitative trends remain consistent.
\subsection{Additional Coverage Analysis}
\label{app:coverage_analysis}

The main experiments report Success@10 because all routing methods are evaluated under the same maximum budget $k_{\max}=10$. Since RouterEval contains thousands of candidate LLMs in both the medium-pool and large-pool settings, selecting 10 models is still a compact subset of the full candidate pool. To further examine whether the advantage of FlexRouter only appears at $k=10$, we rerun the experiments and report average Success@k across different budgets in Table~\ref{tab:app_success_by_k}.

FlexRouter trails EmbedLLM at $k=1$, which is expected because the method is not optimized for selecting a single strongest model. However, FlexRouter overtakes independent selection once routing becomes genuinely multi-model, e.g., from $k=3$ in the medium-pool setting and from $k=2$ in the large-pool setting. This supports the intended use case of FlexRouter: constructing a compact candidate pool whose selected models cover complementary strengths rather than redundantly selecting similar high-scoring models.

\textbf{Pool composition.} Success@k measures whether the selected pool contains at least one correct model, but it does not show how many correct models are selected or whether improvements come from reducing all-wrong pools. We therefore report two additional pool-composition metrics. Avg-Correct@10 measures the average number of correct models in the selected pool, while Zero-Correct Rate measures the fraction of queries for which all selected models are incorrect. The results are shown in Table~\ref{tab:app_pool_composition}.

The results show a trade-off between correct-candidate density and candidate-pool coverage. EmbedLLM obtains slightly higher Avg-Correct@10, but it also leaves more queries with zero correct candidates. In contrast, FlexRouter reduces the Zero-Correct Rate while maintaining a high number of correct models in the selected pool. Compared with EmbedLLM, FlexRouter reduces the Zero-Correct Rate from 0.160 to 0.128 in the medium-pool setting and from 0.022 to 0.008 in the large-pool setting. This suggests that independent top-k selection may concentrate correct models on some queries while leaving more queries completely uncovered, whereas FlexRouter reduces the number of all-wrong selected pools.

\begin{table}[t]
\centering
\caption{Average Success@k across different model-call budgets. FlexRouter is designed for complementary subset construction rather than single-model routing. It becomes stronger once the setting allows multiple models to be selected.}
\label{tab:app_success_by_k}
\begin{tabular}{llccc}
\toprule
Setting & $k$ & FlexRouter & EmbedLLM & BaRP \\
\midrule
Medium-pool & 1  & 0.417 & 0.542 & 0.527 \\
Medium-pool & 3  & 0.709 & 0.707 & 0.639 \\
Medium-pool & 5  & 0.789 & 0.765 & 0.732 \\
Medium-pool & 7  & 0.832 & 0.805 & 0.784 \\
Medium-pool & 10 & 0.872 & 0.840 & 0.844 \\
\midrule
Large-pool & 1  & 0.700 & 0.750 & 0.665 \\
Large-pool & 2  & 0.928 & 0.925 & 0.822 \\
Large-pool & 5  & 0.975 & 0.956 & 0.878 \\
Large-pool & 10 & 0.992 & 0.978 & 0.916 \\
\bottomrule
\end{tabular}
\end{table}

\begin{table}[t]
\centering
\caption{Pool-composition analysis at $k=10$. Avg-Correct@10 measures the average number of correct models in the selected subset, while Zero-Correct Rate measures the fraction of queries for which the selected subset contains no correct model.}
\label{tab:app_pool_composition}
\begin{tabular}{llccc}
\toprule
Setting & Method & Avg-Correct@10 $\uparrow$ & Zero-Correct Rate $\downarrow$ & Success@10 $\uparrow$ \\
\midrule
Medium-pool & EmbedLLM   & 5.17 & 0.160 & 0.840 \\
Medium-pool & BaRP       & 4.51 & 0.156 & 0.844 \\
Medium-pool & FlexRouter & 4.90 & 0.128 & 0.872 \\
\midrule
Large-pool & EmbedLLM   & 7.94 & 0.022 & 0.978 \\
Large-pool & BaRP       & 6.64 & 0.084 & 0.916 \\
Large-pool & FlexRouter & 7.49 & 0.008 & 0.992 \\
\bottomrule
\end{tabular}
\end{table}

\subsection{Coverage-Oriented Baselines}
\label{app:coverage_baselines}

The main experiments compare FlexRouter with strong routing baselines, including EmbedLLM and BaRP. However, these methods mainly score candidate models independently and are not explicitly designed to optimize Success@k. To further test whether FlexRouter's gains come from the DPP coverage objective rather than from adding any diversity heuristic, we compare with three additional coverage- and diversity-oriented baselines.

\paragraph{Random-k.}
Random-k uniformly selects $k$ models from the candidate pool.

\paragraph{MaxDiversity.}
MaxDiversity greedily selects models to maximize pairwise embedding diversity. We initialize the selected set with the highest predicted-quality model and then greedily add the model that maximizes the minimum pairwise distance to the already selected models.

\paragraph{MMR.}
MMR greedily balances predicted model quality and redundancy. At each step, it selects
\begin{equation}
i^* = \arg\max_{i \notin S}
\alpha q_i(x) - (1-\alpha)\max_{j\in S}\mathrm{sim}(i,j),
\end{equation}
where $q_i(x)$ is the predicted quality score and $\mathrm{sim}(i,j)$ is cosine similarity in the fixed model-embedding space. This baseline is closest to a diversity-aware top-k heuristic, since it applies a post-hoc pairwise penalty to independently predicted quality scores.

\begin{table}[t]
\centering
\caption{Coverage-oriented baselines across medium- and large-pool settings. FlexRouter outperforms Random-k, MaxDiversity, and MMR across different budgets.}
\label{tab:app_coverage_baselines}
\begin{tabular}{llcccc}
\toprule
Setting & $k$ & FlexRouter & MMR & MaxDiversity & Random-k \\
\midrule
Medium-pool & 3  & 0.709 & 0.706 & 0.689 & 0.592 \\
Medium-pool & 5  & 0.789 & 0.757 & 0.754 & 0.697 \\
Medium-pool & 10 & 0.872 & 0.842 & 0.837 & 0.806 \\
\midrule
Large-pool & 3  & 0.954 & 0.921 & 0.915 & 0.745 \\
Large-pool & 5  & 0.975 & 0.942 & 0.935 & 0.812 \\
Large-pool & 10 & 0.992 & 0.961 & 0.950 & 0.874 \\
\bottomrule
\end{tabular}
\end{table}

FlexRouter consistently outperforms the coverage-oriented baselines in both RouterEval settings. The comparison with MMR is especially informative: MMR applies diversity as a post-hoc penalty to independently predicted quality scores, while FlexRouter jointly learns query-dependent model quality and redundancy through the DPP coverage objective. These results suggest that the improvement is not simply due to adding a generic diversity heuristic, but comes from learning complementarity under the coverage-oriented subset objective.

\begin{table}[t]
\centering
\caption{Supervision ablation in the medium-pool setting. FlexRouter remains effective when trained with fewer labeled queries.}
\label{tab:app_supervision_ablation}
\begin{tabular}{lcccccc}
\toprule
Training Queries & MATH & BBH & GPQA & IFEval & MuSR & Avg Success@10 \\
\midrule
25\%  & 0.785 & 0.943 & 0.836 & 0.963 & 0.772 & 0.860 \\
50\%  & 0.793 & 0.941 & 0.845 & 0.954 & 0.792 & 0.865 \\
100\% & 0.793 & 0.946 & 0.878 & 0.963 & 0.838 & 0.884 \\
\bottomrule
\end{tabular}
\end{table}

\subsection{Supervision Ablation}
\label{app:supervision_ablation}

RouterEval provides dense binary correctness labels for each query--model pair. To study how much query-level supervision FlexRouter needs, we train on 25\%, 50\%, and 100\% of the available training queries in the medium-pool setting, while evaluating all checkpoints on the same held-out test set. For each retained query, we keep all model-level correctness labels. Therefore, this ablation measures query-level supervision efficiency rather than model-level label sparsity.

FlexRouter degrades gracefully as the number of labeled training queries decreases. With only 25\% of the training queries, it achieves 0.860 average Success@10, within 2.4 percentage points of the full-supervision setting. With 50\% of the training queries, it achieves 0.865 average Success@10, within 1.9 percentage points of the full-supervision setting. This suggests that FlexRouter can learn useful quality-diversity structure without using the full query pool.

This experiment studies query-level label sparsity. It does not address model-level label sparsity, where only a subset of candidate models is labeled for each query. That setting is closer to matrix completion or bandit-style exploration and remains an important direction for scalable routing.

\section{Related Work}
\paragraph{LLM Routing.}
Routing across LLMs has emerged as a practical approach for improving performance under heterogeneous model capabilities and costs. A common strategy is to predict model-wise correctness scores and select the top-$k$ models independently. Methods such as EmbedLLM~\citep{embedllm} learn joint representations of queries and models to estimate query-dependent performance. Other recent approaches, such as RouterDC~\citep{chen2024routerdc} and BaRP~\citep{barp}, formulate routing as representation learning or contextual bandit problems. However, these methods typically score models independently and do not explicitly account for correlations or shared failure modes, which can lead to redundant selections.

\paragraph{Multi-Model Ensembles and Verification.} Our coverage-oriented objective aligns with multi-generation LLM pipelines where a downstream verifier or ranker selects the final answer. This paradigm is widely used in ensemble methods like LLM-Blender~\citep{jiang2023llm_blender}, which ranks outputs from multiple diverse models, and self-consistency frameworks~\citep{wang2022self_consistency} that aggregate multiple reasoning paths. 
More recently, frameworks like Mixture-of-Agents~\citep{wang2024mixture_of_agents} have demonstrated that layering multiple LLMs can significantly enhance generation quality through collaborative refinement.
While these methods benefit from diverse candidate pools, they typically rely on fixed, manually curated model sets. FlexRouter automates and optimizes the construction of these complementary candidate sets on a per-query basis.

\paragraph{Determinantal Point Processes.}
Determinantal Point Processes~\citep{kulesza2012dppfnt} have been widely used for subset selection tasks that require balancing quality and diversity, including document summarization~\citep{cho2019multi_document}, recommendation~\citep{wilhelm2018practical_recommendation}, and information retrieval~\citep{affandi2014learning_information_retrieval, deng2020personalized_information_retrieval}. A DPP defines a distribution over subsets where the determinant of a kernel matrix captures both item quality and pairwise diversity. Prior work has explored learning DPP kernels from data and applying DPPs to subset selection problems with fixed ground-truth subsets. In contrast, our setting involves weak supervision where no single optimal subset is observed. We therefore design a coverage-oriented objective that directly optimizes the probability of selecting at least one correct model, while leveraging the diversity-inducing properties of DPPs to model complementarity among LLMs.

\end{document}